\newcommand{\RB}{\mathbb{R}}
\theoremstyle{plain}
\newtheorem{theorem}{Theorem}[section]
\newtheorem{lemma}[theorem]{Lemma}
\theoremstyle{definition}
\theoremstyle{plain}
\newtheorem{remark}[theorem]{Remark}
\icmltitlerunning{On Regularizing  Coordinate-MLPs}
\begin{document}

\twocolumn[
\icmltitle{On Regularizing  Coordinate-MLPs}



\icmlsetsymbol{equal}{*}

\begin{icmlauthorlist}
\icmlauthor{Sameera Ramasinghe}{yyy}
\icmlauthor{Lachlan MacDonald}{yyy}
\icmlauthor{Simon Lucey}{yyy}
\end{icmlauthorlist}

\icmlaffiliation{yyy}{University of Adelaide}

\icmlcorrespondingauthor{Sameera Ramasinghe}{sameera.ramasinghe@adelaide.edu.au}

\icmlkeywords{COordinate-MLPs, Regularization}

\vskip 0.3in
]



\printAffiliationsAndNotice{\icmlEqualContribution} 

\begin{abstract}
We show that typical implicit regularization assumptions for deep neural networks (for regression) do not hold for coordinate-MLPs, a family of MLPs that are now ubiquitous in computer vision for representing high-frequency signals. Lack of such implicit bias disrupts smooth interpolations between training samples, and hampers generalizing across signal regions with different spectra. We investigate this behavior through a Fourier lens and uncover that as the bandwidth of a coordinate-MLP is enhanced, lower frequencies tend to get suppressed unless a suitable prior is provided explicitly. Based on these insights, we propose a simple regularization technique that can mitigate the above problem, which can be  incorporated into existing networks without any architectural modifications.
\end{abstract}

\section{Introduction}

It is well-established that deep neural networks (DNN), despite mostly being used in the over-parameterized regime, exhibit remarkable generalization properties without explicit regularization \cite{neyshabur2014search, zhang2017understanding, savarese2019infinite, goodfellow2016deep}. This behavior questions the classical theories that predict an inverse relationship between model complexity and generalization, and is often referred to in the literature as the ``implicit regularization" (or implicit bias) of DNNs \cite{li2018learning, kubo2019implicit, soudry2018implicit, poggio2018theory}. Characterizing this surprising phenomenon has been the goal of extensive research in recent years.

In contrast to this mainstream understanding, we show that coordinate-MLPs (or implicit neural networks), a class of MLPs that are specifically designed to overcome the \emph{spectral bias} of regular MLPs, do not follow the same behavior. Spectral bias, as the name suggests, refers to the propensity of DNNs to learn functions with low frequencies, making them unsuited for encoding signals with high frequency content. Coordinate-MLPs, on the other hand, are architecturally modified MLPs (via specific activation functions \cite{sitzmann2020implicit, ramasinghe2021beyond} or positional embedding schemes \cite{mildenhall2020nerf, zheng2021rethinking}) that can learn functions with high-frequency components. By virtue of this unique ability, coordinate-MLPs are now extensively being used in many computer vision tasks for representing signals including texture generation \cite{henzler2020learning,  oechsle2019texture, henzler2020learning, xiang2021neutex}, shape representation \cite{chen2019learning, deng2020nasa, tiwari2021neural, genova2020local, basher2021lightsal, mu2021sdf, park2019deepsdf}, and novel view synthesis \cite{mildenhall2020nerf, niemeyer2020differentiable, saito2019pifu, sitzmann2019scene, yu2021pixelnerf,  pumarola2021d, pumarola2021d, rebain2021derf, martin2021nerf, wang2021nerf, park2021nerfies}. However, as we will show in this paper, these architectural alterations entail an unanticipated drawback:  coordinate-MLPs, trained by conventional means via stochastic gradient descent (SGD), are incapable of simultaneously generalizing well at both lower and higher ends of the spectrum, and thus, are not automatically biased towards less complex solutions (Fig.~\ref{fig:derivatives}). Based on this observation, we question the popular understanding that the implicit bias of neural networks is more tied to SGD than the architecture  \cite{zhang2017understanding, zhang2021understanding}.

Strictly speaking, the term ``generalization" is not meaningful without context. For instance, consider a regression problem where the training points are sparsely sampled. Given more trainable parameters than the number of training points, a neural network can, in theory, learn infinitely many functions while achieving a zero train error. Therefore, the challenge is to learn a function within a space restricted by certain priors and intuitions regarding the problem at hand. The generalization then can be measured by the extent to which the learned function is close to these prior assumptions about the task. Within a regression problem, one intuitive solution that is widely accepted by the practitioners (at least from an engineering perspective) is to have a form of ``smooth" interpolation between the training points, where the low-order derivatives are bounded \cite{biship2007pattern}. In classical machine learning, in order to restrict the class of learned functions, explicit regularization techniques were used \cite{craven1978smoothing, wahba1975smoothing, kimeldorf1970correspondence}. Paradoxically, however, over-parameterized neural networks with extremely high capacity prefer to converge to such smooth solutions without any explicit regularization, despite having the ability to fit more complex functions \cite{zhang2017understanding, zhang2021understanding}.

Although this expectation of smooth interpolation is valid for both \textit{a}) general regression problems with regular MLPs and \textit{b}) high-frequency signal encoding with coordinate-MLPs, a key difference exists between their end-goals. In a general regression problem, we do not expect an MLP to perfectly fit the training data. Instead, we expect the MLP to learn a smooth curve that achieves a good trade-off between the bias and variance (generally tied to the anticipation of noisy data), which might not exactly overlap with the training points. In contrast, coordinate-MLPs are particularly expected to \emph{perfectly fit} the training data that may include both low and high fluctuations, while interpolating smoothly between samples. This difficult task requires coordinate-MLPs to preserve a rich spectrum with both low and high frequencies (often with higher spectral energy for low frequencies, as the power spectrum of natural signals such as images tends to behave as $1/f^2$ \cite{ruderman1994statistics}). We show that coordinate-MLPs naturally do \emph{not} tend to converge to such solutions, despite the existence of possible solutions within the parameter space.

It should be pointed out that it \emph{has} indeed been previously observed that  coordinate-MLPs tend to  produce noisy solutions when the bandwidth is increased excessively \cite{tancik2020fourier, ramasinghe2021beyond, sitzmann2020implicit}. However, our work complements these previous observations: \emph{First}, the existing works do not offer an explanation on why providing a coordinate-MLP the capacity to add high frequencies to the spectrum would necessarily affect lower frequencies. In contrast, we elucidate this behavior through a Fourier lens, and show that as the spectrum of the coordinate-MLPs is enhanced via hyper-parameters or depth, the lower frequencies tend to be suppressed (for a given set of weights), hampering their ability to interpolate smoothly within low-frequency regions. \emph{Second}, we interpret this behaviour from a model complexity angle, which allows us to incorporate network-depth into our analysis. \textit{Third}, we show that this effect is common to all the types of coordinate-MLPs, \textit{i.e.,} MLPs with \textit{a}) positional embeddings \cite{mildenhall2020nerf}, \textit{b}) periodic activations \cite{sitzmann2020implicit}, and \textit{c}) non-periodic activations \cite{ramasinghe2021beyond}. Finally, we propose a simple regularization term that can enforce the coordinate-MLPs to preserve both low and high frequencies in practice, enabling better generalization across the spectrum.

Our contributions are summarized below:
\begin{itemize}
    \item We show that coordinate-MLPs are not implicitly biased towards low-complexity solutions, \textit{i.e.,} typical implicit regularization assumptions do not hold for coordinate-MLPs. Previous mainstream works try to connect the  implicit bias of neural networks to  the properties of the optimization procedure (SGD), rather than the architecture \cite{zhang2017understanding, zhang2021understanding}. On the contrary, we provide counter-evidence that the implicit bias of neural networks might indeed be strongly tied to the architecture.
    
    \item We present a general result (Theorem \ref{th:md-fourier}), that can be used to obtain the Fourier transform of shallow networks with arbitrary activation functions and multi-dimensional inputs, even when the Fourier expressions are not directly integrable over the input space. Utilizing this result we derive explicit formulae to study shallow coordinate-MLPs from a Fourier perspective. 
    
    \item Using the above expressions, we  show that shallow coordinate-MLPs tend to suppress lower frequencies when the bandwidth is increased via hyper-parameters or depth, disrupting smooth interpolations. Further, we empirically demonstrate that these theoretical insights from shallow networks extrapolate well to deeper ones.
    
    \item We propose a simple regularization technique to enforce smooth interpolations between training samples while overfitting to training points with high fluctuations, preserving a rich spectrum. The proposed technique can be easily applied to existing coordinate-MLPs without any architectural modifications.
\end{itemize}

It is also important to note that our analysis stands out from previous theoretical research \cite{tancik2020fourier, ramasinghe2021beyond, zheng2021rethinking} on coordinate-MLPs due to several aspects: \textit{a}) we work on a relatively realistic setting excluding assumptions such as infinite-width networks or linear models. Although we \emph{do} consider shallow networks, we also take into account the effect of increasing depth. \textit{b}) All our expressions are derived from the first principles and do not demand the support of additional literature such as neural tangent kernel (NTK) theory. \textit{c}) we analyze different types of coordinate-MLPs within a single framework where the gathered insights are common across each type. 

\begin{figure}[!tp]
    \centering
    \includegraphics[width=1.0\columnwidth]{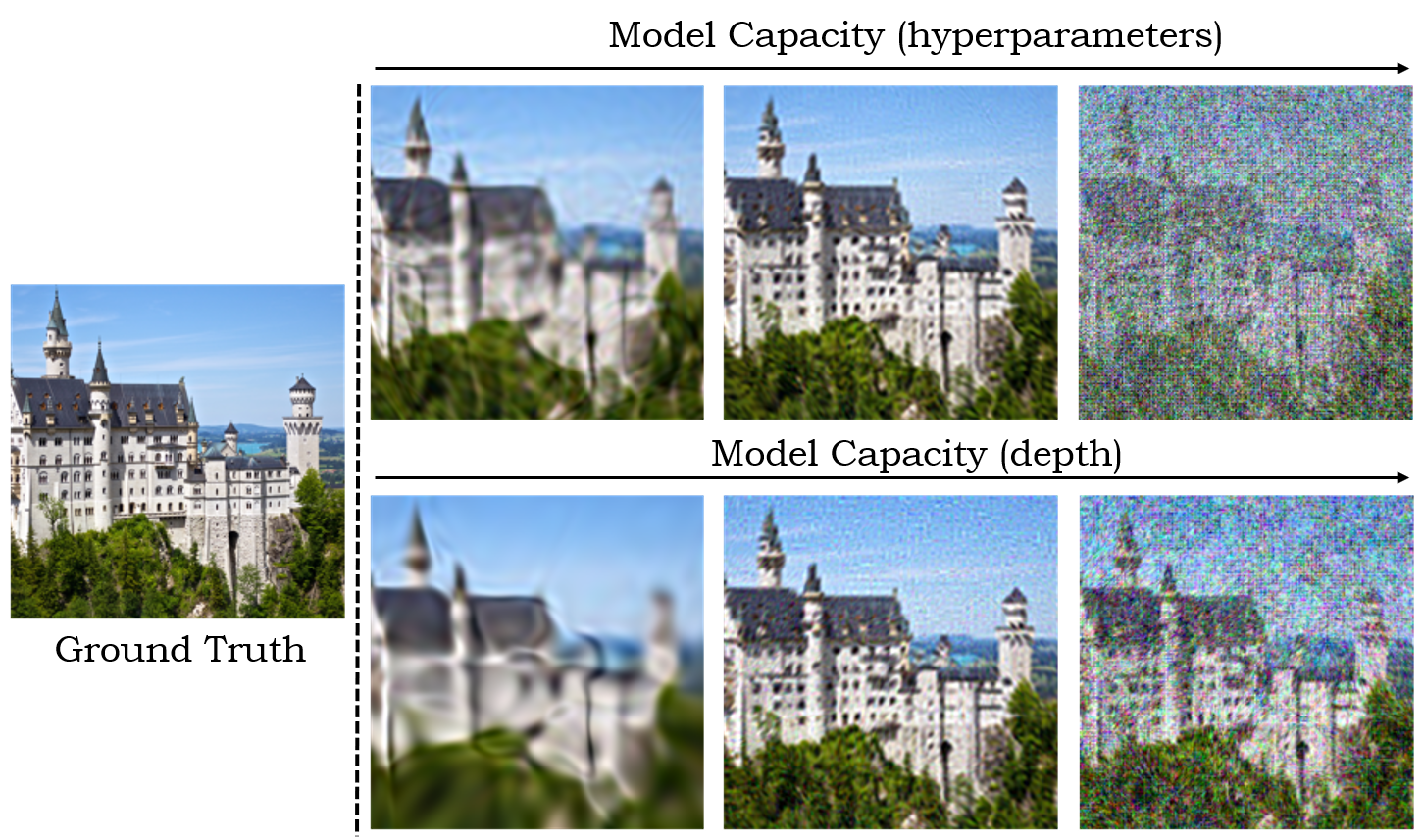}
    \vskip -0.1in
    \caption{ \textbf{Implicit regularization assumptions do not hold for Coordinate-MLPs}. The network is trained with $10\%$ of the pixels in each instance. As the capacity of the network is increased via depth or hyperparameters, coordinate-MLPs tend to produce more complex solutions even though they are trained with SGD. This is in contrast to the regular-MLPs, where the networks converge to ``smooth" solutions independent of the model capacity \cite{kubo2019implicit, heiss2019implicit}. Further, this provides evidence that the implicit regularization of neural networks might be strongly linked to the architecture rather than the optimization procedure, as opposed to mainstream understanding \cite{zhang2017understanding, zhang2021understanding}.}
    \label{fig:derivatives}
\end{figure}

\section{Related works}
\label{sec:related_works}








\textbf{Coordinate-MLPs:}  Despite the extensive empirical usage of coordinate-MLPs in various computer vision tasks, limited attention has been paid towards theoretically understanding their underlying mechanics. An interesting work by \citet{tancik2020fourier} affirmed that under the assumption of infinite-width, RFF positional embeddings allow tuning the bandwidth of the corresponding neural tangent kernel (NTK) of the MLP.  \citet{zheng2021rethinking} attempted to develop a unified framework to analyze positional embeddings from a non-Fourier lens. In particular, they showed that the performance of a positional embedding layer is governed by two factors: \textit{a}) the stable rank and \textit{ b}) the distance-preservation of the representation formed by the embedding layer.  \citet{sitzmann2020implicit}, in an alternative approach, proposed sinusoidal activation functions that enable the coordinate-MLPs to discard positional embedding layers. Proceeding one step further, \citet{ramasinghe2021beyond} focused on developing a generic framework to analyze the effectiveness of a given activation function in coordinate-MLPs. They showed that the eigenvalue distribution of the output of a hidden layer could be used as a proxy measure to the Lipschitz continuity of the network. Based on this finding, they proposed a class of novel activation functions. 

\textbf{Implicit regularization of DNNs: } Understanding and characterizing the astonishing implicit generalization properties of DNNs has been an important research topic in recent years. Despite the overwhelming empirical evidence, establishing a rigorous theoretical underpinning for this behavior has been a challenge to this date. The related research can be broadly categorized into two: investigating implicit regularization on the \textit{a}) weight space \cite{bishop1995regularization, soudry2018implicit, poggio2018theory, gidel2019implicit} and the \textit{b}) function space \cite{maennel2018gradient, kubo2019implicit, heiss2019implicit}.  Notably, \citet{maennel2018gradient} analyzed ReLU networks under macroscopic assumptions and affirmed that for given input data, there are only finitely many, ``simple" functions that can be obtained using regular ReLU MLPs, independent of the network size.   \citet{kubo2019implicit} showed that ReLU-MLPs interpolate between training samples almost linearly, and  \citet{heiss2019implicit} took a step further, proving that the interpolations can converge to (nearly) spline approximations. Seminal works by \citet{zhang2017understanding} and \cite{zhang2021understanding} showed that these generalization properties are strongly connected to the SGD. In contrast, we show that coordinate-MLPs tend to converge to complex solutions (given enough capacity), despite being trained with SGD. 

Further, all above works consider shallow networks in order to obtain precise theoretical guarantees. Similarly, we also utilize shallow architectures for a part of our analysis. However, our work significantly differs from some of the above-mentioned  theoretical work, as we do not focus on establishing rigorous theoretical bounds on generalization. Rather, we uncover, to the best of our knowledge, a critically overlooked shortcoming of coordinate-MLPs, and present plausible reasoning for this phenomenon from a Fourier perspective. Also, as a remedy to this problem, we propose a simple regularization technique that can act as a prior and preserve a better spectrum.

\section{Coordinate-MLPs}
\label{Sec:coodrdinate}
This section includes a brief exposition of coordinate-MLPs. Coordinate-MLPs aim to encode continuous signals $f:\mathbb{R}^n \to \mathbb{R}^m$, \textit{e.g.,} images, sound waves, or videos, as their weights. The inputs to the network typically are low-dimensional coordinates, \textit{e.g.}, $(x,y)$ positions, and the outputs are the sampled signal values at each coordinate \textit{e.g.}, pixel intensities. The key difference between coordinate-MLPs and regular MLPs is that the former is designed to encode signals with higher frequencies -- mitigating the spectral bias of the latter -- via specific architectural modifications. Below, we will succinctly discuss three types of coordinate-MLPs.

\textbf{Random Fourier Feature (RFF) MLPs }are compositions of a positonal embedding layer and subsequent ReLU layers. Let $\Omega$ denote the probability space $\RB^{n}$ equipped with the Gaussian measure of standard deviation $2\pi\sigma>0$.  For $D\geq1$, write elements $\vb{L}$ of $\Omega^{n}$ as matrices $[\vb{l}_{1},\dots,\vb{l}_{D}]$ composed of $D$ vectors drawn independently from $\Omega$.  Then, the random Fourier feature (RFF) positional embedding $\gamma:\Omega^{D}\times\RB^{n}\rightarrow\RB^{2D}$ is

\begin{equation}
\label{eq:rff1}
    \gamma(\vb{L},\vb{x}):=[\vb{e}(\vb{l}_{1},\vb{x}),\dots,\vb{e}(\vb{l}_{D},\vb{x})]^{T},
\end{equation}

where $\vb{e}:\Omega\times\RB^{n}\rightarrow\RB^{2}$ is the random function defined by the formula

\begin{equation}
\label{eq:rff2}
    \vb{e}(\vb{l},\vb{x}):=[\sin(\vb{l}\cdot\vb{x}),\cos(\vb{l}\cdot\vb{x})].
\end{equation}



The above layer is then followed by a stack of ReLU layers.

\textbf{Sinusoidal MLPs } were originally proposed by \citet{sitzmann2020implicit}. Let $\vb{W}$ and $\vb{b}$ be the weights and the bias of a hidden layer of a sinusoidal coordinate-MLP, respectively.  Then, the output of the hidden-layer is $\mathrm{sin}(2 \pi a (\vb{W}\cdot \vb{x} + \vb{b}))$, where $a$ is a hyper-parameter that can control the spectral-bias of the network and $\vb{x}$ is the input. 


\textbf{Gaussian MLPs }are a recently proposed type of coordinate-MLP by \citet{ramasinghe2021beyond}. The output of a Gaussian hidden-layer is defined as $e^{-(\frac{(\vb{W}\cdot \vb{x} + \vb{b})^2}{2\sigma^2})}$, where $\sigma$ is a hyper-parameter. 

In the next section, we will derive a general result that can be used to obtain the Fourier transform of an arbitrary multi-dimensional shallow MLP (given that the 1D Fourier transform the activation function exists), which is used as the bedrock in a bulk of our derivations later.




\section{Fourier transform of a shallow MLP.}
\label{sec:FT}




Let $\mathcal{G}: \mathbb{R}^n \to \mathbb{R}$ be an MLP with a single hidden layer with $m$ neurons, and a point-wise activation function $\alpha:\mathbb{R} \to \mathbb{R}$. Suppose we are interested in obtaining the Fourier transform $\hat{\mathcal{G}}$ of $\mathcal{G}$. Since the bias only contributes to the DC component, we formulate $\mathcal{G}$ as follows:

\begin{equation}
    \mathcal{G} = \sum_{i= 1}^{m} w^{(2)}_i\alpha(\vb{w}^{(1)}_i\cdot \vb{x}),
\end{equation}

where $\vb{w}_i^{(1)}$ are rows of the fist-layer affine weight matrix \textit{i.e.}, input to the $i^{th}$ hidden neuron is $\vb{w}^{(1)}_i\cdot \vb{x}$, and $w^{(2)}_i$ are the weights of the last layer. Since Fourier transform is a linear operation, it is straightforward to see that

\begin{equation}
    \hat{\mathcal{G}} = \sum_{i= 1}^{m} w^{(2)}_i\hat{\alpha}(\vb{w}^{(1)}_i\cdot \vb{x}).
\end{equation}

Thus, by obtaining the Fourier transform of

\begin{equation}
\label{eq:activation_fourier}
    f(\vb{x}):=\alpha(\vb{w\cdot x}),
\end{equation}

it should be possible to derive the Fourier transform of the MLP. However, doing so using the standard definition of the multi-dimensional Fourier transform can be infeasible in some cases. For example, consider a Gaussian-MLP. Then, one can hope to calculate the Fourier transform of Eq.~\ref{eq:activation_fourier} as

\begin{equation}
\label{eq:standard_gau}
    \hat{f}(\vb{k}) = \int_{\mathbb{R}^n} e^{-(\frac{(\vb{w}\cdot \vb{x})^2}{2\sigma^2} + 2\pi i \vb{x} \cdot \vb{k})} d\vb{x}.
\end{equation}

However, note that there exists an $n-1$ dimensional subspace where $(\vb{w}\cdot \vb{x})$ is zero and thus, the first term inside the exponential becomes $0$ in these cases. Therefore, Eq.~\ref{eq:standard_gau} is not integrable.
Nonetheless, $f$ defines a tempered distribution, and its Fourier transform $\hat{f}$ can therefore be calculated by fixing a Schwartz test function $\varphi$ on $\mathbb{R}^{n}$ and using the identity
\begin{equation}\label{identity}
\langle\hat{f},\varphi\rangle = \langle f,\hat{\varphi}\rangle,
\end{equation}
defining the Fourier transform of the tempered distribution $f$.  Here $\langle\cdot,\cdot\rangle$ denotes the pairing between distributions and test functions (see Chapter 8 of \cite{distributions} for this background). 




First, we present the following lemma.

\begin{lemma}\label{lemma1}
Let $A:\mathbb{R}^{n-1}\rightarrow\mathbb{R}^{n}$ be an isometric linear map, and let $A^{\perp}\subset\mathbb{R}^{n}$ denote the 1-dimensional subspace which is orthogonal to the range of $A$.  Then for any integrable function $\varphi\in L^{1}(\mathbb{R}^{n})$, one has
\[
\int_{\mathbb{R}^{n-1}}\hat{\varphi}(A\vb{x})d\vb{x} = (2\pi)^{\frac{n-1}{2}}\int_{A^{\perp}}\varphi(\vb{y})\,d_{A^{\perp}}\vb{y},
\]
where $d_{A^{\perp}}\vb{y}$ is the volume element induced on the subspace $A^{\perp}$.
\end{lemma}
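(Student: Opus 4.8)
The plan is to compute the left-hand side directly from the integral definition of the Fourier transform and then collapse the resulting double integral by Fourier inversion in dimension $n-1$. Throughout I would first establish the identity for Schwartz $\varphi$ --- which is all that is needed for the intended application, where $\varphi$ is a test function --- and only afterwards consider extending to general $\varphi\in L^{1}$; for Schwartz $\varphi$ every interchange of integration below is justified by absolute convergence, and $\hat{\varphi}$ is genuinely integrable over the hyperplane $\mathrm{range}(A)$, so the left-hand side is well defined.

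First I would write out $\hat{\varphi}(A\vb{x})$ using the defining integral and move the linear map onto the other factor via the adjoint relation $(A\vb{x})\cdot\vb{y} = \vb{x}\cdot(A^{T}\vb{y})$. The isometry hypothesis enters next: since $A$ has orthonormal columns, $A^{T}A = I_{n-1}$, and if $\vb{n}$ is a unit vector spanning the one-dimensional space $A^{\perp}$ then $A^{T}\vb{n} = 0$. I would therefore change variables on $\mathbb{R}^{n}$ by writing $\vb{y} = A\vb{u} + t\vb{n}$ with $(\vb{u},t)\in\mathbb{R}^{n-1}\times\mathbb{R}$; this substitution has unit Jacobian (it is an orthogonal change of basis) and turns the inner phase into $\vb{x}\cdot(A^{T}\vb{y}) = \vb{x}\cdot\vb{u}$, decoupling the normal coordinate $t$ from the oscillatory factor.

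After this substitution the left-hand side becomes $\int_{\mathbb{R}^{n-1}}\bigl(\int_{\mathbb{R}^{n-1}} h(\vb{u})\,e^{-i\vb{x}\cdot\vb{u}}\,d\vb{u}\bigr)\,d\vb{x}$ (up to the normalisation constant), where $h(\vb{u}):=\int_{\mathbb{R}}\varphi(A\vb{u} + t\vb{n})\,dt$ is the projection of $\varphi$ onto $\mathrm{range}(A)$, a Schwartz function on $\mathbb{R}^{n-1}$. The inner integral is exactly the $(n-1)$-dimensional Fourier transform of $h$, and integrating it over all $\vb{x}$ is, by Fourier inversion at the origin, a constant multiple of $h(0)$. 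Since $h(0) = \int_{\mathbb{R}}\varphi(t\vb{n})\,dt = \int_{A^{\perp}}\varphi(\vb{y})\,d_{A^{\perp}}\vb{y}$, this is precisely the right-hand side, and the prefactor is read off by bookkeeping the powers of $2\pi$ in the chosen Fourier normalisation. Conceptually this is just the Fourier projection--slice identity composed with inversion at a point: integrating a slice of $\hat{\varphi}$ over the hyperplane reconstructs the value at the origin of the projection of $\varphi$ along the orthogonal line.

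The step I expect to be the main obstacle is making the collapse of the $\mathbb{R}^{n-1}$ integral rigorous, since integrating $e^{-i\vb{x}\cdot\vb{u}}$ over $\vb{x}$ yields a Dirac delta only in the distributional sense, not as an ordinary integral. I would sidestep this by keeping the order of integration as written and invoking Fourier inversion applied to the honest Schwartz function $h$, rather than literally evaluating an oscillatory integral; Fubini's theorem then licenses every rearrangement for Schwartz $\varphi$. The remaining subtlety is that the stated hypothesis is only $\varphi\in L^{1}$: for merely integrable $\varphi$ the slice $\hat{\varphi}\circ A$ need not be integrable and $h$ need not be defined pointwise, so I would either interpret the statement as applying to the test functions for which it is actually used, or extend from the Schwartz case by a density argument once convergence of both sides is secured.
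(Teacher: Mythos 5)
Your proof is correct in substance, but it takes a genuinely different route from the paper's. You change variables $\vb{y}=A\vb{u}+t\vb{n}$ to recognise $\hat{\varphi}\circ A$ (up to normalising constants) as the $(n-1)$-dimensional Fourier transform of the projection $h(\vb{u})=\int_{\mathbb{R}}\varphi(A\vb{u}+t\vb{n})\,dt$, and then collapse $\int\hat{h}$ by Fourier inversion at the origin --- the projection--slice identity composed with pointwise inversion. The paper instead regularises the integral with a Gaussian factor $e^{-\epsilon|\vb{x}|^{2}/2}$ (dominated convergence), swaps integrals by Fubini, uses that the Fourier transform of a Gaussian is a Gaussian, and lets the resulting kernel $\eta_{\epsilon}(\vb{x})=\epsilon^{-\frac{n-1}{2}}e^{-|\vb{x}|^{2}/2\epsilon}$ act as an approximate identity concentrating mass on $A^{\perp}$. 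Your route is cleaner for Schwartz $\varphi$, where every interchange is honest Fubini and the inversion theorem does all the work; the paper's mollifier argument is aimed at handling $\varphi\in L^{1}$ directly, but, as you correctly flag, it too tacitly needs more than $L^{1}$: the dominated-convergence step requires $\hat{\varphi}\circ A\in L^{1}(\mathbb{R}^{n-1})$, and both the approximate-identity limit and the right-hand side require evaluating $\varphi$ pointwise on the measure-zero line $A^{\perp}$. Since the lemma is only ever applied to test-function-type integrands, your restriction to the Schwartz class loses essentially nothing.

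One point you deferred to ``bookkeeping'' deserves attention, because it is precisely where both derivations are loose. Under the symmetric normalisation $\hat{\varphi}(\vb{k})=(2\pi)^{-n/2}\int\varphi(\vb{y})e^{-i\vb{k}\cdot\vb{y}}\,d\vb{y}$ (the one used in the paper's proof of Theorem \ref{th:md-fourier}), your computation yields the constant $(2\pi)^{\frac{n-2}{2}}$, not the stated $(2\pi)^{\frac{n-1}{2}}$: for $n=2$ and $\varphi(\vb{y})=e^{-|\vb{y}|^{2}/2}$ both the slice integral and the line integral equal $\sqrt{2\pi}$, so the true constant is $1=(2\pi)^{0}$. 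The paper's proof arrives at the stated constant only through two compensating slips --- it drops the $(2\pi)^{-n/2}$ prefactor when expanding $\hat{\varphi}$, and it treats $\eta_{\epsilon}$, whose total mass is $(2\pi)^{\frac{n-1}{2}}$, as a unit-mass approximate identity. So if you carry out your bookkeeping carefully you will land on a convention-dependent constant that disagrees with the one in the statement; this is a defect of the stated lemma rather than of your argument, but you should not expect the powers of $2\pi$ to ``read off'' to the advertised value.
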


\begin{proof}
First using the dominated convergence theorem, and then applying Fubini's theorem, we have
\begin{align*}
    \int\hat{\varphi}(A\vb{x})\,d\vb{x} =& \lim_{\epsilon\rightarrow0}\int e^{-\frac{\epsilon|\vb{x}|^{2}}{2}}\hat{\varphi}(A\vb{x})\,d\vb{x}\\ =& \lim_{\epsilon\rightarrow0}\int\bigg(\int e^{-i(A\vb{x})\cdot\vb{k}}e^{-\frac{\epsilon|\vb{x}|^{2}}{2}}\,d\vb{x}\bigg)\varphi(\vb{k})\,d\vb{k}\\ =& (2\pi)^{\frac{n-1}{2}}\lim_{\epsilon\rightarrow0}\int \frac{e^{-\frac{|A^{T}\vb{k}|^{2}}{2\epsilon}}}{\epsilon^{\frac{n-1}{2}}}\varphi(\vb{k})\,d\vb{k},
\end{align*}
where on the last line we have used the fact that the Fourier transform of a Gaussian is again a Gaussian.  Now, for any $\vb{y}\in A^{\perp}$, let $A^{\parallel}(\vb{y})$ denote the $(n-1)$-dimensional hyperplane parallel to the range of $A$ that passes through $\vb{y}$, and denote by $d_{A^{\parallel}(\vb{y})}\vb{z}$ the induced volume element along $A^{\parallel}(\vb{y})$.  Each $\vb{k}\in\mathbb{R}^{n}$ decomposes uniquely as $\vb{k} = \vb{y}+\vb{z}$ for some $\vb{y}\in A^{\perp}$ and $\vb{z}\in A^{\parallel}(\vb{y})$.  The integral over $\mathbb{R}^{n}$ of any integrable function $\psi$ on $\mathbb{R}^{n}$ can then be written
\[
\int \psi(\vb{k})\,d\vb{k} = \int_{A^{\perp}}\int_{A^{\parallel}(\vb{y})}\psi(\vb{y+z})d_{A^{\parallel}(\vb{y})}\vb{z}\,d_{A^{\perp}}\vb{y}
\]
Since the function $e^{-\frac{|A^{T}\vb{k}|^{2}}{2\epsilon}}$ is constant along any line parallel to $A^{\perp}$, the dominated convergence theorem now tells us that $(2\pi)^{\frac{1-n}{2}}\int\hat{\varphi}(A\vb{x})\,d\vb{x}$ is equal to
\begin{align*}
&\int_{A^{\perp}}\lim_{\epsilon\rightarrow0}\int_{A^{\parallel}(\vb{y})}\frac{e^{-\frac{|A^{T}(\vb{z})|^{2}}{2\epsilon}}}{\epsilon^{\frac{n-1}{2}}}\varphi(\vb{y+z})\,d_{A^{\parallel}(\vb{y})}\vb{z}\,d_{A^{\perp}}\vb{y}\\
=&\int_{A^{\perp}}\lim_{\epsilon\rightarrow0}\int_{\mathbb{R}^{n-1}}\frac{e^{-\frac{|\vb{x}|^{2}}{2\epsilon}}}{\epsilon^{-\frac{n-1}{2}}}\varphi(\vb{y}+A\vb{x})\,d\vb{x}\,d_{A^{\perp}}\vb{y}\\
=&\int_{A^{\perp}}\lim_{\epsilon\rightarrow0}(\eta_{\epsilon}*\varphi_{\vb{y}})(0)\,d_{A^{\perp}}\vb{y}\\=&\int_{A^{\perp}}\varphi(\vb{y})d_{A^{\perp}}\vb{y},
\end{align*}
where on the third line we have used the definitions $\varphi_{\vb{y}}(\vb{x}):=\varphi(\vb{y}+A\vb{x})$ and $\eta_{\epsilon}(\vb{x}):=\epsilon^{-\frac{n-1}{2}}e^{-\frac{|\vb{x}|^{2}}{2\epsilon}}$ of functions on $\mathbb{R}^{n-1}$, and on the final line we have used the fact that convolution with $\eta_{\epsilon}$ is an approximate identity for $L^{1}(\mathbb{R}^{n-1})$.
\end{proof}

Now, we present our main theorem.

\begin{theorem}\label{th:md-fourier}
The Fourier transform of $f$ is the distribution
\[
\hat{f}(\vb{k}) = \frac{(2\pi)^{\frac{n}{2}}}{|\vb{w}|}\hat{\alpha}\bigg(\vb{\frac{w}{|w|^{2}}\cdot k}\bigg)\delta_{\vb{w}}(\vb{k}),
\]
where $\delta_{\vb{w}}(\vb{k})$ is the Dirac delta distribution which concentrates along the line spanned by $\vb{w}$.
\end{theorem}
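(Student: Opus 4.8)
The plan is to compute $\hat f$ directly from the distributional definition in Eq.~\eqref{identity}, testing against an arbitrary Schwartz function $\varphi$ and reducing the resulting $n$-dimensional integral to a one-dimensional Fourier transform of $\alpha$ by exploiting the fact that $f(\vb{x})=\alpha(\vb{w}\cdot\vb{x})$ depends on $\vb{x}$ only through its component along $\vb{w}$. First I would write $\langle\hat f,\varphi\rangle=\langle f,\hat\varphi\rangle=\int_{\mathbb{R}^{n}}\alpha(\vb{w}\cdot\vb{x})\,\hat\varphi(\vb{x})\,d\vb{x}$ and decompose $\mathbb{R}^{n}$ into the line $\mathbb{R}\hat{\vb{w}}$ spanned by the unit vector $\hat{\vb{w}}=\vb{w}/|\vb{w}|$ and its orthogonal complement, writing $\vb{x}=t\hat{\vb{w}}+\vb{u}$ with $t\in\mathbb{R}$, $\vb{u}\in\hat{\vb{w}}^{\perp}$. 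Since $\vb{w}\cdot\vb{x}=|\vb{w}|\,t$ is independent of $\vb{u}$, integrating over $\vb{u}$ first leaves an integral of $\hat\varphi$ over the affine hyperplane $t\hat{\vb{w}}+\hat{\vb{w}}^{\perp}$.

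Then I would apply Lemma~\ref{lemma1} to collapse that hyperplane integral. The offset $t\hat{\vb{w}}$ is handled by a modulation trick: setting $\psi_{t}(\vb{k}):=e^{-it\hat{\vb{w}}\cdot\vb{k}}\varphi(\vb{k})$ gives $\hat\psi_{t}(\vb{y})=\hat\varphi(\vb{y}+t\hat{\vb{w}})$, so the shifted hyperplane integral of $\hat\varphi$ equals the unshifted hyperplane integral of $\hat\psi_{t}$. Taking $A$ to be an isometric parametrization of $\hat{\vb{w}}^{\perp}$ (so that $A^{\perp}=\mathbb{R}\hat{\vb{w}}$), Lemma~\ref{lemma1} turns this into $(2\pi)^{\frac{n-1}{2}}\int_{\mathbb{R}}e^{-its}\varphi(s\hat{\vb{w}})\,ds$, i.e. a one-dimensional integral of $\varphi$ restricted to the line spanned by $\vb{w}$, carrying a modulation factor $e^{-its}$.

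Substituting back, I would recognize the remaining $t$-integral against $\alpha(|\vb{w}|t)$ as a one-dimensional Fourier transform. Writing $\phi_{0}(s):=\varphi(s\hat{\vb{w}})$ (a Schwartz function of one variable), I would use the one-dimensional distributional duality $\langle\alpha(|\vb{w}|\cdot),\hat\phi_{0}\rangle=\langle\widehat{\alpha(|\vb{w}|\cdot)},\phi_{0}\rangle$ together with the scaling rule $\widehat{\alpha(|\vb{w}|\cdot)}(s)=|\vb{w}|^{-1}\hat\alpha(s/|\vb{w}|)$ to obtain $\langle\hat f,\varphi\rangle\propto|\vb{w}|^{-1}\int_{\mathbb{R}}\hat\alpha(s/|\vb{w}|)\,\varphi(s\hat{\vb{w}})\,ds$. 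Since along the line $\vb{k}=s\hat{\vb{w}}$ one has $s/|\vb{w}|=\frac{\vb{w}}{|\vb{w}|^{2}}\cdot\vb{k}$, this pairing is exactly that of the claimed distribution, with $\delta_{\vb{w}}$ being the line-supported delta normalized so the powers of $2\pi$ match.

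The main obstacle is legitimizing the interchange of integrations and the one-dimensional Fourier transform when $\alpha$ is not integrable (e.g. $\sin,\cos$ for sinusoidal MLPs), which is precisely the reason the tempered-distribution framework is invoked. I would handle this by keeping $\alpha(|\vb{w}|\cdot)$ as a tempered distribution in $t$ paired against the Schwartz function $\phi_{0}$, and, wherever a genuine interchange is required, inserting the Gaussian regulator $e^{-\epsilon|\vb{x}|^{2}/2}$ exactly as in the proof of Lemma~\ref{lemma1} and passing to the limit $\epsilon\to0$. A secondary bookkeeping point is fixing the normalization of $\delta_{\vb{w}}$ so that the $(2\pi)^{\frac{n-1}{2}}$ produced by the lemma combines with the one-dimensional transform to yield the stated $(2\pi)^{\frac{n}{2}}$.
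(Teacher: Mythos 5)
Your proposal is correct, and the bookkeeping worry at the end is unfounded: with $\delta_{\vb{w}}$ the arc-length delta on the line spanned by $\vb{w}$, your pairing $(2\pi)^{\frac{n}{2}}|\vb{w}|^{-1}\int\hat{\alpha}(s/|\vb{w}|)\,\varphi(s\vb{w}/|\vb{w}|)\,ds$ is exactly the pairing of the claimed distribution, so no renormalization of $\delta_{\vb{w}}$ is needed. Your argument uses the same two ingredients as the paper's proof --- the duality \eqref{identity} and Lemma \ref{lemma1} after splitting $\mathbb{R}^{n}$ along $\vb{w}$ --- but runs them in the opposite order. The paper expands $\hat{\varphi}$ as an integral, substitutes $\vb{x}=A\vb{y}$ with $A\vb{e}_{1}=\vb{w}/|\vb{w}|$, extracts the one-dimensional transform of $\alpha$ from the $y_{1}$-integral first, and only then invokes Lemma \ref{lemma1}, once, on $g(\vb{k})=|\vb{w}|^{-1}\hat{\alpha}\big(\vb{w}\cdot\vb{k}/|\vb{w}|^{2}\big)\varphi(\vb{k})$; because the lemma is applied to a subspace through the origin, the paper never needs your modulation trick for the affine hyperplanes $t\hat{\vb{w}}+\hat{\vb{w}}^{\perp}$. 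What your ordering buys is rigor at exactly the step where the paper is formal: the paper writes $\int e^{-iy_{1}s}\alpha(|\vb{w}|y_{1})\,dy_{1}$ as though it converged, which it does not for the non-integrable activations the theorem must cover (for the sinusoid, $\hat{\alpha}$ is a sum of Dirac masses, and the subsequent use of Lemma \ref{lemma1} on $g$, stated for $L^{1}$ functions, then also requires reinterpretation), whereas your one-dimensional duality $\langle\alpha(|\vb{w}|\cdot),\hat{\phi}_{0}\rangle=\langle\widehat{\alpha(|\vb{w}|\cdot)},\phi_{0}\rangle$ with $\phi_{0}(s)=\varphi(s\vb{w}/|\vb{w}|)$ stays meaningful for any tempered $\alpha$. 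The only price is the initial Fubini interchange and the slice-by-slice application of the lemma, both harmless since $\alpha$ is polynomially bounded and $\hat{\varphi}$ is Schwartz.
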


\begin{proof}
Let $A = \bigg[\frac{\vb{w}}{|\vb{w}|},\vb{a_{2}},\dots,\vb{a_{n}}\bigg]$ be any special orthogonal matrix for which $A\vb{e_{1}} = \frac{\vb{w}}{|\vb{w}|}$, letting $\bar{A} = [\vb{a_{2}},\dots,\vb{a_{n}}]$ denote the corresponding submatrix. Given a vector $\vb{k} = (k_{1},\dots,k_{n})^{T}\in\mathbb{R}^{n}$, use the notation $\vb{k}' = (k_{2},\dots,k_{n})^{T}\in\mathbb{R}^{n-1}$.  Then the right hand side $\langle f,\hat{\varphi}\rangle$ of Equation \eqref{identity} is given by
\begin{align*}
    &(2\pi)^{-\frac{n}{2}}\int\int e^{-i\vb{x\cdot k}}\alpha(\vb{w\cdot x})\varphi(\vb{k})\,d\vb{x}\,d\vb{k}\\
    =&(2\pi)^{-\frac{n}{2}}\int\int e^{-i \vb{y}\cdot(A^{T}\vb{k})}\alpha(\vb{w}\cdot(A\vb{y}))\varphi(\vb{k})\,d\vb{y}\,d\vb{k}\\
    =&(2\pi)^{\frac{1-n}{2}}\int\int\bigg((2\pi)^{-\frac{1}{2}}\int e^{-iy_{1}\vb{\frac{w}{|w|}\cdot k}}\alpha(|\vb{w}|y_{1})\,dy_{1}\bigg)\\
    &\times e^{-i\vb{y}'\cdot(A^{T}\vb{k})'}\varphi(\vb{k})\,d\vb{y}'\,d\vb{k}\\
    =&(2\pi)^{\frac{1-n}{2}}\int\int\frac{1}{|\vb{w}|}\hat{\alpha}\bigg(\vb{\frac{w}{|w|^{2}}\cdot k}\bigg)\varphi(\vb{k})e^{-i\vb{y}'\cdot(A^{T}\vb{k})'}\,d\vb{k}\,d\vb{y'}\\
    =&(2\pi)^{\frac{1-n}{2}}\int\int\frac{1}{|\vb{w}|}\hat{\alpha}\bigg(\vb{\frac{w}{|w|^{2}}\cdot k}\bigg)\varphi(\vb{k})e^{-i\vb{y}'\cdot(\bar{A}^{T}\vb{k})}\,d\vb{k}\,d\vb{y}'\\
    =&(2\pi)^{\frac{1-n}{2}}\int\int\frac{1}{|\vb{w}|}\hat{\alpha}\bigg(\vb{\frac{w}{|w|^{2}}\cdot k}\bigg)\varphi(\vb{k})e^{-i(\bar{A}\vb{y}')\cdot\vb{k}}\,d\vb{k}\,d\vb{y'}\\
    =&(2\pi)^{\frac{1}{2}}\int \hat{g}(\bar{A}\vb{y}')\,d\vb{y}',
\end{align*}
where for the first equality we have made the substitution $\vb{x} = A\vb{y}$, and in the final step we have set $g(\vb{k}):=\frac{1}{|\vb{w}|}\hat{\alpha}\bigg(\vb{\frac{w}{|w|^{2}}\cdot k}\bigg)\varphi(\vb{k})$.  Our integral is thus of the form given in Lemma \ref{lemma1}, and the result follows.
\end{proof}

Next, using the above result, we will gather useful insights into different types of coordinate-MLPs from a Fourier perspective.

\section{Effect of hyperparameters}
\label{sec:hyper}
In this section, we will primarily explore the effect of hyperparameters on the spectrum of coordinate-MLPs, among other key insights.

\subsection{Gaussian-MLP}
The Fourier transform of the Gaussian activation is $\hat{\alpha}(k) = \sqrt{2\pi} \sigma e^{-(\sqrt{2}\pi k\sigma)^2}$. Now, using Theorem \ref{th:md-fourier}, we can obtain the Fourier transform of a Gaussian-MLP with a single hidden layer as

\begin{equation}
\label{eq:gaussian}
   \sum_{i= 1}^{m} w^{(2)}_i \frac{(2\pi)^{\frac{n+1}{2}}\sigma}{|\vb{w}^{(1)}_i|}e^{-\Big(\sqrt{2}\pi  \frac{\vb{w}^{(1)}_i}{|\vb{w}^{(1)}_i|^{2}}\cdot \vb{k} \sigma \Big)^2} \delta_{\vb{w}^{(1)}_i}(\vb{k}).
\end{equation}

\textbf{Discussion: } For fixed $\vb{w}^{(1)}_i$'s and $\sigma$, the spectral energy is decayed as  $\vb{k}$ is increased. A practitioner can increase the energy of higher frequencies by decreasing $\sigma$. Second, although decreasing $\sigma$ can increase the energy of higher frequencies (for given weights), it simultaneously suppresses the energies of lower frequencies due to the $\sigma$ term outside the exponential. Third, one can achieve the best of both worlds (\textit{i.e.,} incorporate higher-frequencies to the spectrum while maintaining higher energies for lower-frequencies) by appropriately tuning $\vb{w}^{(1)}_i$'s. To make the above theoretical observations clearer, we demonstrate a toy example in Fig.~\ref{fig:toyex}.

\subsection{Sinusoidal-MLPs}

 Similar to the Gaussian-MLP, the Fourier transform of a sinusoidal-MLP with a single hidden-layer can be obtained as

\begin{align*}
\label{eq:sinmlp}
    &\sum_{i= 1}^{m} w^{(2)}_i \frac{(2\pi)^{\frac{n}{2}}}{2|\vb{w}^{(1)}_i|}\delta_{\vb{w}^{(1)}_{i}}(\vb{k})\times\\&\times\Bigg( \delta\bigg(\frac{\vb{w}^{(1)}_i}{|\vb{w}^{(1)}_i|^{2}}\cdot \vb{k}-a\bigg) + \delta\bigg(\frac{\vb{w}^{(1)}_i}{|\vb{w}^{(1)}_i|^{2}}\cdot \vb{k}+a\bigg)\Bigg) ,
\end{align*}

\textbf{Discussion: } According to Eq.~\ref{eq:sinmlp}, the only frequencies present in the spectrum are multiples $\vb{k} = a\vb{w}_{i}^{(1)}$ of the weight vectors $\vb{w}^{(1)}_{i}$. It follows that the magnitudes of the frequencies present in the spectrum are lower-bounded by the number $a\,\mathrm{min}\{|\vb{w}_i|\}$. Thus for a given set of weights, one can add higher frequencies to the spectrum by increasing $a$. At the same time, the spectrum can be \emph{balanced} by minimizing some $\vb{w}_i$'s accordingly.


\subsection{RFF-MLPs}
We consider a shallow RFF-MLP $\mathbb{R}^n \to \mathbb{R}$ with a single hidden ReLU layer. Recall that the positional embedding scheme is defined follwing Eq.~\ref{eq:rff1} and \ref{eq:rff2}.
The positional embedding layer $\gamma$ is then followed by a linear transformation $A:\RB^{2D}\rightarrow\RB^{2D}$ yielding
\[
f_{2}(\vb{L},\vb{x})^{i} = \sum_{j}a^{i}_{j} \gamma(\vb{L},\vb{x})^{j},
\]
which is then followed by ReLU activations and an affine transformation. ReLU activations can be approximated via linear combinations of various polynomial basis functions \cite{ali2020polynomial, telgarsky2017neural}. One such set of polynomials is Newman polynomials defined as
\[
N_{m}(x):=\prod_{i=1}^{m-1}(x+\exp(-i/\sqrt{n})).
\]
It is important to note that although we stick to the Newman polynomials in the subsequent derivations, the obtained insights are valid for \emph{any} polynomial approximation, as they only depend on the power terms of the polynomial approximation.



For $m\geq 5$, yielding
\begin{align*}
f_{3}(\vb{L},\vb{x})^{i} =& \prod_{\alpha=1}^{m-1}\bigg(\sum_{j}a^{i}_{j}f_{1}(\vb{L},\vb{x})^{j}+\exp(-\alpha/\sqrt{n})\bigg)\\ =& \sum_{\beta=0}^{m-2} \kappa(m,\beta) \big(\sum_{j}a^{i}_{j}f_{1}(\vb{L},\vb{x})^{j}\big)^{\beta}\\ =& \sum_{\beta=0}^{m-2}\kappa(m,\beta)\times\\&\sum_{k_{1}+\cdots+k_{2D}=\beta}{\beta\choose k_{1},\dots,k_{2D}}\prod_{t=1}^{2D}(a^{i}_{t}f_{1}(\vb{L},\vb{x})^{t})^{k_{t}}
\end{align*}
where $\kappa(m,\beta) = {m-2\choose \beta}\prod_{\alpha=1}^{m-1-\beta}\exp(-\alpha/\sqrt{m})$.  It follows that the spectrum of $f_{3}$ is determined by the spectra of the powers $\big(\gamma(\vb{L},\vb{x})^{j}\big)^{k}$ of the components of $f_{1}$.  Write $j = 2j_{1}+j_{2}$, where $j_{1}$ is the floor of $j/2$. Then
\[
\gamma(\vb{L},\vb{x})^{j} = \frac{e^{i\vb{l}_{j_{1}}\cdot\vb{x}}+(-1)^{j}e^{i\vb{l}_{j_{1}}\cdot\vb{x}}}{2}.
\]
Therefore, for $k\geq 0$,
\[
(\gamma(\vb{L},\vb{x})^{j})^{k} =  2^{-k}\sum_{a=0}^{k}(-1)^{j(k-a)}e^{i(2a-k)\vb{l}_{j_{1}}\cdot\vb{x}}.
\]
We then see that
\begin{align*}
\prod_{t=1}^{2D}(f{1}(\vb{l},\vb{x})^{t})^{k_{t}} =& \prod_{t=1}^{2D}2^{-k_{t}}\bigg(\sum_{a=0}^{k_{t}}(-1)^{t(k-a)}e^{i(2a-k)\vb{l}_{j_{1}}\cdot\vb{x}}\bigg)\\ =& C\sum_{a_{1},\dots,a_{2D}=0}^{k_{1},\dots,k_{2D}}\prod_{t=1}^{2D}(-1)^{t(k_{t}-a_{t})}e^{i(2a_{t}-k_{t})\vb{l}_{t_{1}}\cdot\vb{x}}\\ =& C\sum_{a_{1},\dots,a_{2D}=0}^{k_{1},\dots,k_{2D}}C'e^{i\sum_{t=1}^{2D}(2a_{t}-k_{t})\vb{l}_{t_{1}}\cdot\vb{x}},
\end{align*}
where $C = \prod_{t=1}^{2D}2^{-k_{t}}$ and $C' = \prod_{t=1}^{2D}(-1)^{t(k_{t}-a_{t})}$.  The proof of Theorem \ref{thm} can then be used to show that the Fourier transform of $f_{3}$ is a linear combination of terms of the form
\[
\vb{k}\mapsto\delta\bigg(\frac{\vb{s}\cdot\vb{k}}{|\vb{s}|^{2}}-1\bigg)\delta_{\vb{s}}(\vb{k}),
\]
where $\vb{s}= \sum_{j=1}^{D}\big((2a_{2j-1}-k_{2j-1}) + (2a_{2j}-k_{2j})\big)\vb{l}_{j}$ for positive integers $k_{1},\dots,k_{2D}$ such that $k_{1}+\cdots+k_{2D}\leq m-2$, and integers $0\leq a_{j}\leq k_{j}$ for $j=1,\dots,2D$.  It follows that the spectrum of $f_{3}$ is concentrated on frequencies
\[
\vb{k} = \sum_{j=1}^{D}\big((2a_{2j-1}-k_{2j-1}) + (2a_{2j}-k_{2j})\big)\vb{l}_{j},
\]

for non-negative integers $k_{1},\dots,k_{2D}$ adding to $m-2$, and for all non-negative integers $a_{j}\leq k_{j}$ for all $j=1,\dots,2D$.  Here we recall that $D$ is the dimension determined by the choice of RFF positional embedding.

\textbf{Discussion:} Recall that if $U, V \sim \mathcal{N}(0, \sigma^2)$, then, $Var(t_1U + t_2V) = (t_1^2+ t_2^2)\sigma^2$. Therefore the overall spectrum can be made wider by increasing the standard deviation $\sigma$ of the distribution from which the $\vb{l}_{j}$ are drawn. However, since the $D$ is constant, a larger $\sigma$ makes frequencies less concentrated in the lower end, reducing the overall energy of the low frequency components. Nonetheless, the spectrum can be altered via adjusting the network weights.

Thus far, we established that in all three types of coordinate-MLPs, a rich spectrum consisting of both low and high frequencies can be preserved by properly tuning the weights and hyperparameters. However, coordinate-MLPs tend to suppress lower frequencies when trained without explicit regularization (see Fig.~\ref{fig:depthvsspectrum}). Moreover, note that in each Fourier expression, the Fourier components cease to exist if $\vb{k}$ is not in the direction of $\vb{w}^{(1)}_i$, due to the Dirac-delta term. Therefore, the angles between  $\vb{w}^{(1)}_i$'s should be increased, in order to add frequency components in different directions. 


\section{Effect of depth}

\label{sec:depth}
This section investigates the effects of increasing depth on the spectrum of coordinate-MLPs. Comprehensive derivations of the expressions in this section can be found in the Appendix. Let a stack of layers be denoted as $\kappa:\mathbb{R}^n \to \mathbb{R}^d$. Suppose we add another stack of layers $\eta:\mathbb{R}^d \to \mathbb{R}$ on top of $\kappa(\cdot)$ to construct the MLP $\eta \circ \kappa:\mathbb{R}^n \to \mathbb{R}$.  One can rewrite $(\eta \circ \kappa)$ in terms of the inverse Fourier transform as,

\[
    (\eta \circ \kappa) = (\frac{1}{\sqrt{2 \pi}})^{d} \int_{\mathbb{R}^d} \eta(\vb{t})e^{i\vb{t \cdot \kappa(x)} }d\vb{t}
\]

Fourier transform of $(\eta \circ \kappa)$ now can be written as,

\begin{align*}
   \widehat{(\eta \circ \kappa)} & =  \Big(\frac{1}{\sqrt{2 \pi}})^{d+n} \int_{\mathbb{R}^n} \int_{\mathbb{R}^d} \eta(\vb{t})e^{i\vb{t \cdot \kappa(x)} }d\vb{t}e^{-i\vb{k\cdot x}}d\vb{x} \\
   & = \Big(\frac{1}{\sqrt{2 \pi}})^{d+n} \int_{\mathbb{R}^d} \eta(\vb{t})\int{\mathbb{R}^n} e^{i\vb{l \cdot \kappa(x)}} e^{-\vb{k \cdot x}}d\vb{x} d\vb{t},
\end{align*}

which yields,

\[
    \widehat{(\eta \circ \kappa)}(\vb{k}) = \Big( \frac{1}{\sqrt{2 \pi}} \Big)^n \langle \hat{\eta} (\cdot), \beta(\vb{k}, \cdot)  \rangle,
\]



where, $\beta(\vb{k}, \vb{t}) = \int_{\mathbb{R}^n} e^{i(\vb{t \cdot \kappa (x) - k\cdot x})}
$. That is, the composite Fourier transform is the projection of $\hat{\eta}$ on to $\beta(\cdot)$. The maximum magnitude frequency $\vb{k^*_{\eta \circ \kappa}}$ present in the spectrum of $\widehat{(\eta \circ \kappa)}$ is 

\begin{equation}
    \vb{k^*_{\eta \circ \kappa}} = \max_{|\vb{u}|=1}(\vb{k}_{\hat{\eta}, \vb{u}}^* \max_{\vb{x}}(\vb{u \cdot \kappa'(x)})),
\end{equation}

where $\vb{k}_{\hat{\eta}, \vb{u}}$ is the maximum frequency of $\hat{\eta}$ along $\vb{u}$ \cite{bergner2006spectral}. Thus, the addition of a set of layers $\eta$ with sufficiently rich spectrum  tends to increase the maximum magnitude frequency expressible by the network. Next, we focus on the impact on lower frequencies by such stacking. It is important to note that the remainder of the discussion in this section is not a rigorous theoretical derivation, but rather, an intuitive explanation. 

We consider the 1D case for simplicity. Our intention is to explore the effect of adding more layers on $\langle \hat{\eta} (\cdot), \beta(\vb{k}, \cdot)  \rangle$. Note that $\beta$ is an integral of an oscillating function with unit magnitude. The integral of an oscillating function is non-negligible only at points where the phase is close to zero, \textit{i.e.,} points of stationary phase. At the stationary phase of $\beta$, it can be seen that $\frac{du}{dx} = 0$, where $u(x) = t\kappa(x) - kx = 0$. Converting to polar coordinates,

   \begin{align*}
       \frac{du}{dx} & = \frac{d}{dx} r(\kappa(x)\mathrm{sin}\theta - x \mathrm{cos}\theta) =0 \\
       & \kappa'(x_s)\mathrm{sin} \theta - \mathrm{cos} \theta = 0\\
       & \frac{1}{\kappa'(x_s)} = \mathrm{tan}\theta,
   \end{align*}

where $x_s$ are the points at stationary phase. By using the Taylor approximation around $x_s$, we get,

\[
  I_{x_s} \sim \int_{\infty}^{\infty}e^{r(\kappa(x_s)\mathrm{sin}\theta - x_s \mathrm{cos}\theta + \frac{1}{2}\kappa''(x_s)x^2 \mathrm{sin} \theta)}dx
\]
\[
I_{x_s} \sim \int_{\infty}^{\infty}e^{r(\kappa(x_s)\mathrm{sin}\theta - x_s \mathrm{cos}\theta} \Big( \frac{2 \pi}{r|\kappa''(x_s)\mathrm{sin}\theta|} \Big)^{\frac{1}{2}}e^{i \frac{\pi}{4}sgn\{ \kappa''(x_s)\mathrm{sin}\theta \}}
\]

At points where $\kappa''(x_s)$ differs from $0$ largely, the integral vanishes as $(x - x_s)^2$ increases. Since we can obtain the full integral by summing $I_{x_s}$ for all $x_s$ such that $\frac{1}{\kappa'(x_s)} = \mathrm{tan}\theta$. Hence, around the stationary phase,

\[
  \mathrm{min}(\kappa') < \frac{1}{\mathrm{tan}\theta} < \mathrm{max}(\kappa')
\]

which is essentially,

\begin{equation}
    \mathrm{min}(\kappa'(x)) < \frac{k}{t} < \mathrm{max}(\kappa'(x)).
\end{equation}


From the previous analysis, we established that progressively adding layers increases the maximum frequency of the composite function. This causes $\kappa$ to have rapid fluctuations, encouraging $\mathrm{min}(\kappa'(x))$ to increase as the network gets deeper (note that the definition of $\kappa$ keeps changing as more layers are added). Thus, according to Eq.~\ref{eq:deep_condition}, smaller $k$'s causes the quantity $\langle \hat{\eta} (\cdot), \beta(\vb{k}, \cdot)  \rangle$ to be smaller, which encourages suppressing the energy of the lower frequencies of $\widehat{(\eta \circ \kappa)}$. Rigorously speaking, increasing the bandwidth of the network by adding more layers does not necessarily have to increase $\mathrm{min}(\kappa'(x))$. However, our experimental results strongly suggest that this is the case. Summarizing our insights from Sec.~\ref{sec:FT}, \ref{sec:hyper} and Sec.~\ref{sec:depth}, we state the following remarks.

\begin{remark}
    In order to gain a richer spectrum, the
columns of the affine weight matrices of the coordinate-
MLPs need to point in different directions. Ideally, the
column vectors should lie along the frequency directions in
the target signal.
\end{remark}

\begin{remark}
\label{rm:1}
    The spectrum of the coordinate-MLPs can be altered either by tuning the  hyperparameters or changing the depth. In practice, when higher frequencies are added to the spectrum, the energy of lower frequencies tend to be suppressed, disrupting smooth interpolations between samples. However, better solutions exists in the parameter space, thus, an explicit prior is needed to bias the network weights towards those solutions.
\end{remark}

In the next section, we will focus on proposing a regularization mechanism that can aid the coordinate-MLPs in finding a solution with a better spectrum.







\begin{figure*}
    \centering
    \includegraphics [width = 2.\columnwidth]{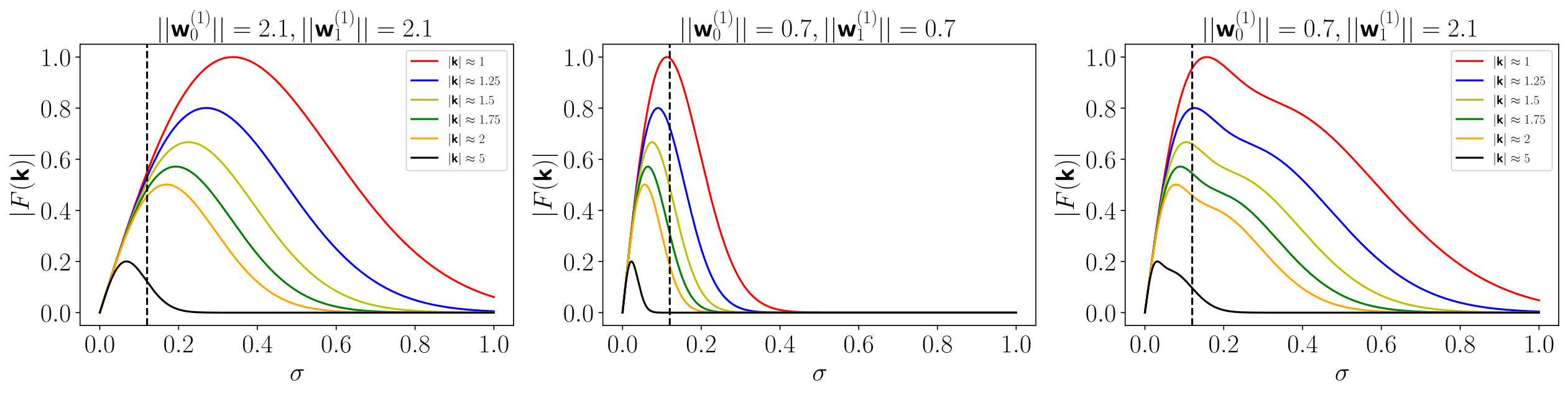}
    \vspace{-1em}
    \caption{\textbf{Toy example (a Gaussian-MLP $\mathbb{R}^2 \to \mathbb{R}$ with a single hidden layer consisting of two neurons): } The behavior of the spectrum against $\sigma$ and the weights of the network is shown. $\textbf{w}^{(1)}_0$ and $\textbf{w}^{(1)}_1$ are the first and second rows of the first-layer affine weight matrix. \textit{Left: } By decreasing $\sigma$, the network can include higher frequencies to the spectrum. However, as the high-frequency components are added to the spectrum, the relative energies of the low-frequency components decrease. \textit{Middle: } The network can still gain high energies for lower frequencies at lower $\sigma$ by decreasing $|\textbf{w}^{(1)}_0|$ and $|\textbf{w}^{(1)}_1|$. However, in this case, the high-frequency components are removed from the spectrum. \textit{Right: } By appropriately tuning $|\textbf{w}^{(1)}_0|$ and $|\textbf{w}^{(1)}_1|$, the spectrum can include  higher-frequency components while preserving the low-frequency energies. All the intensities are normalized for better comprehension.}
    \label{fig:toyex}
\end{figure*}

\section{Regularizing coordinate-MLPs}
\label{sec:regularizing}
 
The spectrum of a function is inherently related to the magnitude of its derivatives. For instance, consider a function $f:\mathbb{R} \to \mathbb{R}$ defined on a finite interval $\epsilon$. Then,

\[
   f(x) =  \int_{\infty}^{\infty} \hat{f}(k)e^{2\pi ikx}dk
\]
It follows that,

\begin{align}
    |\frac{df(x)}{dx}| & =      |2\pi i \int_{\infty}^{\infty} k\hat{f}(k)e^{2\pi ikx}dk|\\
    & \leq |2 \pi| \int_{\infty}^{\infty} |k\hat{f}(k)|dk.
\end{align}

Therefore,


\begin{equation}
\label{eq:dervsft}
    \max_{x \in \epsilon} |\frac{df(x)}{dx}| \leq |2 \pi| \int_{\infty}^{\infty} |k\hat{f}(k)|dk,
\end{equation}
 This is a tight lower-bound in the sense that the equality holds at $x = 0$. For multi-dimensions, we can encourage the spectrum to have a higher or lower frequency support by appropriately constraining the fluctuations along the corresponding directions. We shall now discuss how this fact may be utilized in regularizing coordinate-MLPs.

Let us consider a coordinate-MLP $f:\mathbb{R}^n \to \mathbb{R}$, which we factorise as $f = \tilde{f}\circ g$, where $\tilde{f}$ is the final layer. Then by the chain rule:

\begin{equation}
  \frac{ df}{d \vb{x}} = \frac{ \partial \tilde{f}}{ \partial \vb{y}} \cdot \frac{ \partial g}{ \partial \vb{x}}
\end{equation}

giving

\begin{equation}
\begin{split}
  \Bigl\lvert \frac{df}{d\vb{x}} \Bigr\rvert & = \sqrt{\Big[\frac{ \partial \tilde{f}}{ \partial \vb{y}} \cdot \frac{ \partial g}{ \partial \vb{x}}\Big] \Big[\frac{ \partial \tilde{f}}{ \partial \vb{y}} \cdot \frac{ \partial g}{ \partial \vb{x}}\Big]^T}\\
  & = \sqrt{\frac{ \partial \tilde{f}}{ \partial \vb{y}} \cdot \mathbf{J} \mathbf{J}^T \cdot \Big[ \frac{ \partial \tilde{f}}{ \partial \vb{y}}\Big]^T },
\end{split}
\end{equation}

where $\mathbf{J}$ is the Jacobian of $g$. Let $\mathbf{A} = \mathbf{J} \mathbf{J}^T$.
In order to minimize $\lvert{\frac{df}{d\vb{x}}}\rvert$, it suffices to minimize the magnitude of the components of $\mathbf{J}$, which is equivalent to minimizing the trace of $\mathbf{A}$. Recall that $tr(A) = \sum_k \lambda_k$, where $\lambda_k$ are the eigenvalues of $\textbf{A}$. One has

\begin{equation}
    \lim_{\lvert{\epsilon}\rvert \to 0} \frac{\lvert{g(\vb{x}) -g(\vb{x} + \epsilon u_k)}\rvert }{\lvert{\epsilon u_k}\rvert} = \sqrt{\lambda_k},
\end{equation}

where $u_k$ are the eigenvectors of $\textbf{A}$.  Therefore, minimizing an eigenvalue of $\textbf{A}$ is equivalent to restricting the fluctuations of $g$ along the direction of its associated eigenvector. By this logic, the ideal regularization procedure would be to identify the directions in the spectrum of the target signal where the frequency support is low, and restrict $\lambda_k$'s corresponding to those directions. According to our derivations in the previous sections, this would only be possible by regularizing   $\vb{w}_i$'s, since in practice, we use hyperparameters for coordinate-MLPs that allow higher bandwidth. However, this is a cumbersome task and we empirically found that there is a much more simpler approximation for this procedure that can give equivalent results as,


\begin{equation}
    \mathcal{L}_r =  \frac{\lvert{g(\bar{\vb{x}}) -g(\bar{\vb{x}} + \vb{\xi}  )\rvert}}{\lvert{\vb{\xi}\rvert}},
\end{equation}

where $\bar{\vb{x}}$ are randomly sampled from the coordinate space and $\vb{\xi} \sim \mathcal{N}(0, \Sigma)$ where $\Sigma$ is diagonal with small values. The total loss function for the coordinate-MLP then becomes $\mathcal{L}_{total} = \mathcal{L}_{MSE} + \varepsilon \mathcal{L}_{r}$ where $\varepsilon$ is a small scalar coefficient and $\mathcal{L}_{MSE}$ is the ususal mean squared error loss. The total loss can also be interpreted as encouraging the networks to obtain the smoothest possible solution while perfectly fitting the training samples. By the same argument, one can apply the above regularization on an arbitrary layer, including the output, although we empirically observed that the penultimate layer performs  best.





\section{Experiments}
\label{sec:experiments}

In this section, we will show that the insights developed thus far extend well to deep networks in practice. 

\subsection{Encoding signals with uneven sampling}

The earlier sections showed that coordinate-MLPs tend to suppress low frequencies when the network attempts to add higher frequencies to the spectrum. This can lead to poor performance when the target signal is unevenly sampled because, if the sampling is dense, the network needs to incorporate higher frequencies to properly encode the signal. Thus, at regions where the sampling is sparse, the network fails to produce smooth interpolations, resulting in noisy reconstructions. On the other hand, if the bandwidth of the network is restricted via hyperparameters or depth, the network can smoothly interpolate in sparse regions, but fails to encode information with high fidelity  at dense regions. Explicit regularization can aid the network in finding a properly balanced spectrum in the solution space. Fig.~\ref{fig:1dreg} shows an example for encoding a 1D signal. Fig.~\ref{fig:uneven} illustrates a qualitative example in encoding a 2D image. Table \ref{tab:uneven} depicts quantitative results on the natural dataset by \citet{tancik2020fourier}.

\begin{figure*}[!htp]
    \centering
    \includegraphics [width = 2.\columnwidth]{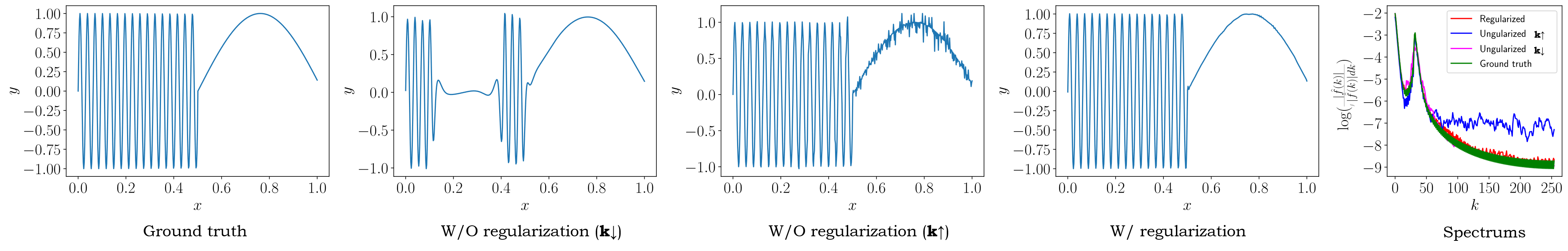}
    \vspace{-1em}
    \caption{\textbf{The effect of explicit regularization.} When trained under conventional means, coordinate MLPs cannot generalize well at both higher and lower ends of the spectrum. This hinders the generalization performance of coordinate-MLPs when the target signal comprises regions with different spectral properties. When the network has insufficient bandwidth, the network cannot correctly capture high-frequency modes. When the network is tuned to have a higher bandwidth, the network fails at modeling lower frequencies. In contrast, the network can preserve a better spectrum with the proposed regularization scheme. This example uses a $4$-layer sinusoid-MLP trained with $33\%$ of the total samples.}
    \label{fig:1dreg}
\end{figure*}

\begin{figure}[!tp]
    \centering
    \includegraphics[width=1.0\columnwidth]{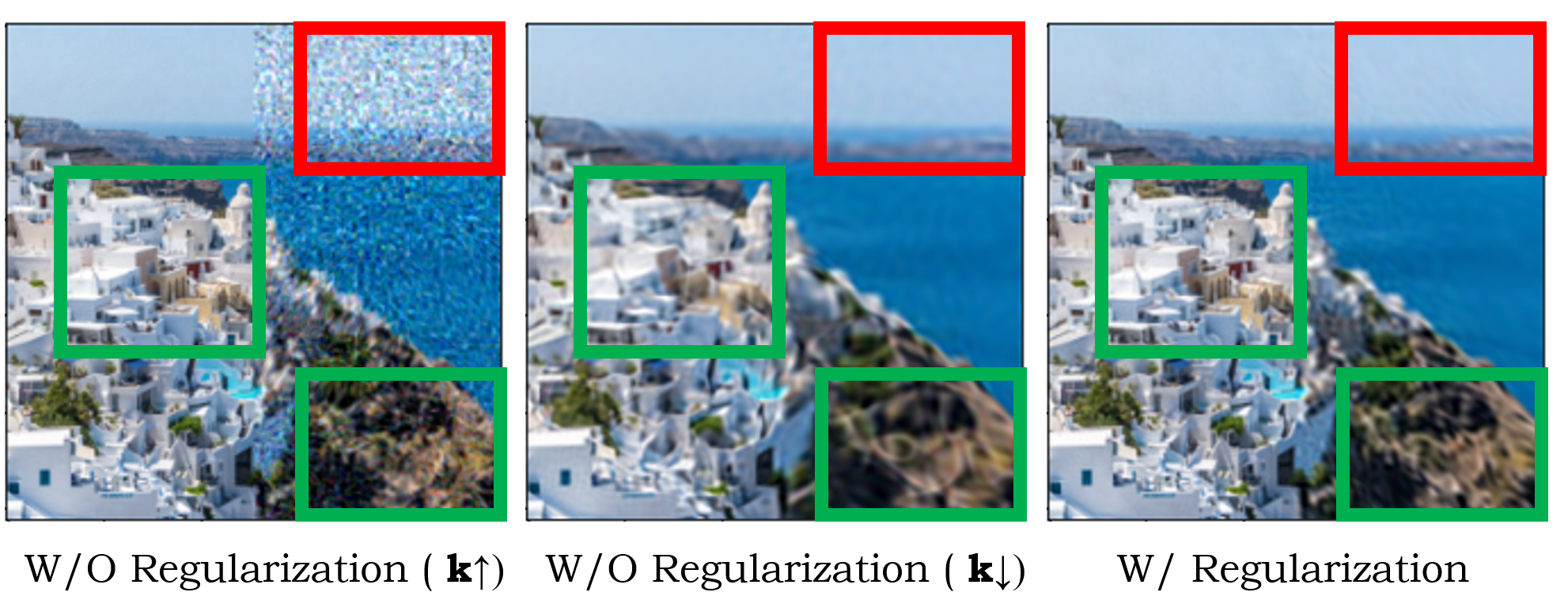}
    \vspace{-2em}
    \caption{ \textbf{Qualitative results for encoding signals with uneven sampling (zoom in for a better view)}. A Gaussian-MLP is trained to encode an image, where the left half of the image is sampled densly, and the right half is sampled with $10\%$ pixels. The reconstruction results are shown. When the bandwidth of the Gaussian-MLP is adjusted to match the sampling procedure of a particular half, the other half demonstrates poor reconstruction. In contrast, when regularized, coordinate-MLPs can preserve both low and high frequencies, giving a balanced reconstruction. Contrast the green and red areas across each setting.}
    \label{fig:uneven}
\end{figure} 

\subsection{Encoding signals with different local spectral properties}

The difficulty in generalizing well across different regions in the spectrum hinders encoding natural signals such as images when the sampling is sparse, as they tend to contain both ``flat" and ``fluctuating" regions, and the network has no prior on how to interpolate in these different regions. See Fig.~\ref{fig:regions} for an example. As evident, the coordinate-MLP struggles to smoothly interpolate between samples under a single hyperparameter setting. Table.~\ref{tab:regions} shows quantitative results over a subset of the STL dataset. As reported, the proposed regularization scheme is able to produce better results.

\begin{figure}[!htp]
    \centering
    \includegraphics[width=0.9\columnwidth]{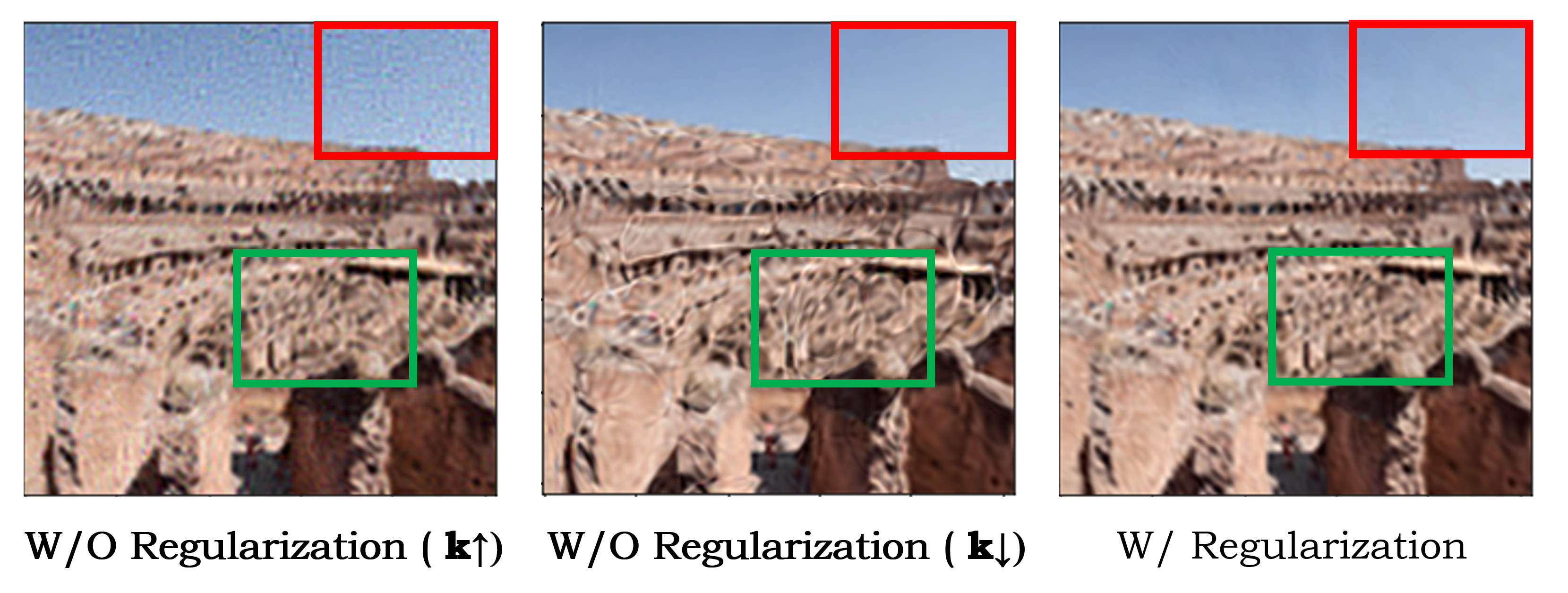}
    \vspace{-1em}
    \caption{ \textbf{A single hyperparameter setting cannot generalize well across the spectrum unless regularized (better viewed in zoom).} With sparse sampling, coordinate-MLPs exhibit inferior reconstruction performance across regions with different spectra. In comparison, regularized networks can achieve the best of both worlds. Compare the highlighted areas across each setting.}
    \label{fig:regions}
\end{figure} 

\subsection{Effect of increasing capacity}

Our derivations in Sec.~\ref{sec:FT} and \ref{sec:depth} showed that shallow coordinate-MLPs tend to suppress lower frequencies when the capacity of the network is increased via depth or hyperparameters. Our empirical results show that this is indeed the case for deeper networks as well (see Fig.~\ref{fig:depthvsspectrum}).

\begin{figure}[!htp]
    \centering
    \includegraphics[width=1.\columnwidth]{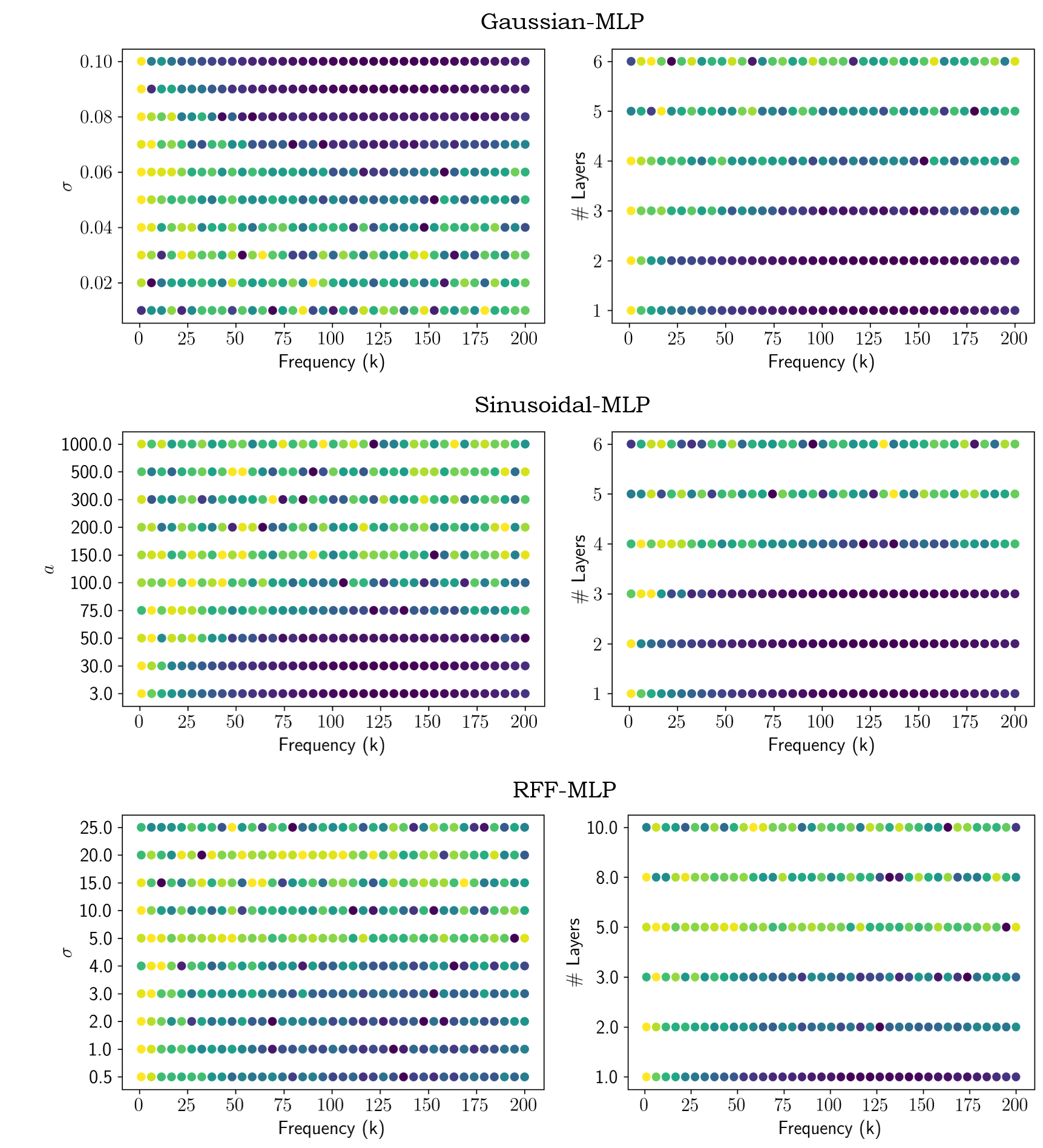}
    \vspace{-1em}
    \caption{ \textbf{Spectra of the coordinate-MLPs against hyperparameters and depth.} We train each network instance to encode a sound wave with $33\%$ sampling. The heat indicates the intensity of the corresponding frequency component after trained with the MSE loss. As illustrated, when the capacity of the network is increased via hyperparameters or depth, the networks tend to converge to solutions with suppressed low-frequency components.}
    \label{fig:depthvsspectrum}
\end{figure}

\begin{table}[!htp]
\centering
       \begin{tabular}{||c|c|c|c||}

\multicolumn{4}{c}{W/O Regularization ($\vb{k}\downarrow$)}\\
\hline
Type & L-PSNR & R-PSNR & T-PSNR  \\
\hline
Gaussian-MLP & $30.12$ & $22.47$ & $27.72$  \\
Sinusoid-MLP &$30.49$ & $21.93$ & $26.89$  \\
RFF-MLP &  $29.89$ & $22.33$ & $26.44$ \\
\hline
\multicolumn{4}{c}{W/O Regularization ($\vb{k}\uparrow$)}\\
\hline
Type & L-PSNR & R-PSNR & T-PSNR  \\
\hline
Gaussian-MLP & $33.43$ & $18.14$ & $22.80$   \\
Sinusoid-MLP & $32.17$ & $19.30$ &$23.49$ \\
RFF-MLP & $32.24$ & $19.11$ & $22.19$\\
\hline
\multicolumn{4}{c}{Regularized}\\
\hline
Type & L-PSNR & R-PSNR & T-PSNR  \\
\hline
Gaussian-MLP (Reg) & $33.11$ & $23.31$ &  $30.15$  \\
Sinusoid-MLP (Reg) & $31.59$ & $22.66$ & $29.94$  \\
RFF-MLP (Reg) &$31.99$ &$22.91$  & $29.59$ \\
\hline
\end{tabular}
 \vspace{-0.7em}
  \caption{\textbf{Encoding images with uneven sampling.} In each training instance, the left half of the image is sampled densely, and the right half is sampled with $10\%$ pixels. With unregularized coordinate-MLPs, when the hyperparameters are tuned to match the sampling of a particular half, the reconstruction of the other half is poor. In contrast, the encoding performance is balanced when regularized. We use $4$-layer networks for this experiment.}
\label{tab:uneven}
\end{table}

\begin{table}[!htp]

\centering
       \begin{tabular}{||c|c|c||}

\hline
Type & W/O Regularization &  Regularized  \\
\hline
Gaussian-MLP & $21.91$ & $24.67$\\
Sinusoid-MLP &$21.88$ & $24.55$   \\
RFF-MLP &  $21.16$ & $23.94$ \\
\hline
\end{tabular}
 \vspace{-0.7em}
  \caption{\textbf{Encoding images with sparse sampling.} We compare the encoding performance over a subset ($30$) images of the STL dataset \cite{coates2011analysis} with $10\%$ sampling. Regularized coordinate-MLPs show superior performance due to better interpolation properties. We use $4$-layer coordinate-MLPs for this experiment.}
\label{tab:regions}
\end{table}

\section{Conclusion}
\label{sec:conclusion}

We show that the traditional implicit regularization assumptions do not hold in the context of coordinate-MLPs. We focus on establishing plausible reasoning for this phenomenon from a Fourier angle and discover that coordinate-MLPs tend to suppress lower frequencies when the capacity is increased unless explicitly regularized. We further show that the developed insights are valid in practice.

\bibliography{example_paper}
\bibliographystyle{icml2022}

\end{document}